\documentclass[letterpaper, 10pt, conference]{ieeeconf}  %

\IEEEoverridecommandlockouts %
\usepackage[utf8]{inputenc}
\usepackage[T1]{fontenc}

\usepackage{amsmath,amssymb,amsthm}
\usepackage{amsfonts}
\usepackage{bm}
\usepackage{graphicx}
\usepackage{cuted} %
\usepackage{subcaption}
\usepackage{booktabs}
\usepackage{nicefrac}
\usepackage{microtype}
\usepackage[table]{xcolor}
\usepackage{url}
\usepackage{xspace}

\makeatletter
\let\NAT@parse\undefined %
\makeatother
\usepackage{cite}

\usepackage{hyperref}

\hypersetup{
    colorlinks=true,  %
    linkcolor=black,  %
    citecolor=black,  %
    urlcolor=black    %
} %
\usepackage{cleveref}

\usepackage{algorithm}
\usepackage{algpseudocode}

\newtheorem{proposition}{Proposition}

\newtheorem{corollary}{Corollary}

\title{\LARGE \bf Training-free Behavior Cloning}%

\author{Maximilian Adang$^{*}$, Timothy Chen$^{*}$, Lars Osterberg, Aiden Swann, and Mac Schwager
\thanks{$^{\dagger}$ Stanford University, Stanford, CA 94404, USA. {\tt\small \{madang, chengine, larso33, swann, schwager\}@stanford.edu}}
\thanks{*Equal contribution}
\thanks{$\ddagger$ This work was supported in part by ONR project N00014-23-1-2354. Toyota Research Institute provided funds to support this work. Maximilian Adang is supported by the Air Force Office of Scientific Research under award number FA9550-23-F-0014. Timothy Chen is supported by the NASA NSTGRO fellowship. Aiden Swann is supported by NSF Graduate Research Fellowship DGE-2146755. We are grateful for this support.}
}

\begin{document}
\bstctlcite{IEEEexample:BSTcontrol} %
\bstctlcite{IEEEctldash}
\maketitle

\begin{strip}
  \par\vspace{-6\baselineskip}
  \centering
    \captionsetup{type=figure,font=footnotesize}
  \includegraphics[width=1.0\textwidth]{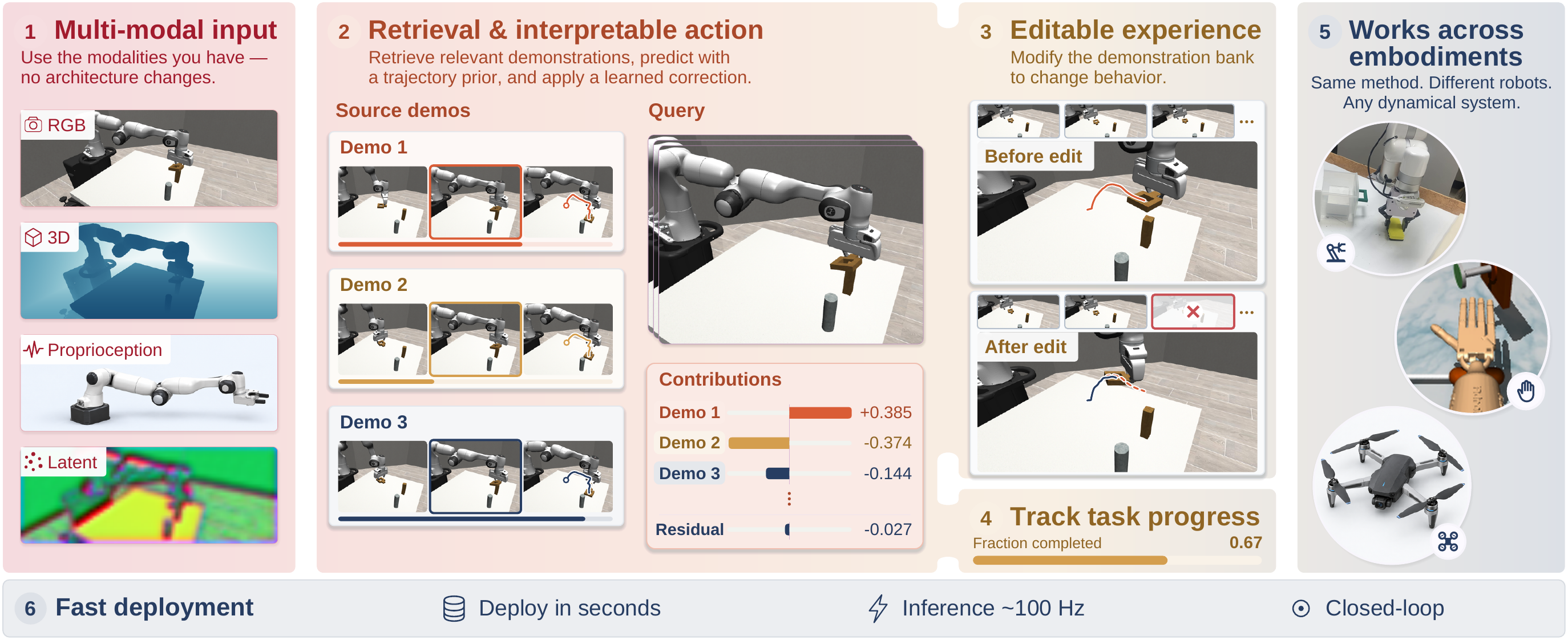}\par %
  \caption{\textbf{Behavior Predictive Control} proposes an alternative paradigm to behavior cloning by extending the capabilities of retrieval policies. Unlike other behavior cloning policies such as Diffusion Policy or VLAs, Behavior Predictive Control (BPC) stores raw imitation data, and blends this data at runtime for action prediction. BPC reduces the time between data collection and deployment from hours to seconds and facilitates execution on resource-constrained hardware. BPC implements a direct algorithmic link between data and inference --- allowing us to causally trace policy outputs back to the source training data, edit that data, and track ongoing task progress without any form of training.}
  \label{fig:bpc_overview}
  \vspace{-10pt}
\end{strip}

\begin{abstract}
Neural behavior cloning compresses demonstrations into large models, making individual actions difficult to trace and policy updates costly. Retrieval policies retain access to demonstrations but struggle with mismatch between recorded and live behavior. We introduce Behavior Predictive Control (BPC), which synthesizes policies without end-to-end policy training by combining an action-aware retrieval metric, a Hankel-based action-continuation prior, and a closed-form one-step residual correction. Inspired by behavioral systems theory, BPC predicts future actions by blending stored observation-action data that best reconstructs the recent runtime observation--action history. Across simulated benchmarks and real-robot deployments, BPC is competitive with learned policies such as $\pi_{0.5}$ (surpassing it in some cases), while reducing policy fitting from hours to seconds on consumer GPUs and supporting closed-loop control upwards of 75 Hz on a Jetson Orin Nano. The retrieved demonstration windows and their coefficients also provide an intrinsic estimate of task progress. Retaining demonstrations within the deployed policy makes its predictions traceable to supporting trajectories and enables behavior revision through the demonstration bank.
\end{abstract}

\section{Introduction}

Large datasets and pretrained representations have driven advances in robot
imitation learning. Diffusion~\cite{chi2023diffusion},
flow-matching~\cite{black2024pi0,zhang2024flowpolicy}, and
vision-language-action (VLA) policies~\cite{brohan2023rt2,kim2024openvla}
fit neural action predictors to expert demonstrations, often using pretrained
visual or vision-language backbones. Their success on manipulation and
other robot tasks has made neural behavior cloning a powerful approach to
learning skills from examples.

This approach nevertheless complicates rapid deployment and behavior revision, and restricts access to robot intelligence.
Fitting or adapting a policy can require hours of GPU computation, making
frequent updates costly and often requiring server-based compute. Once demonstrations are compressed into network
weights, individual actions are difficult to attribute to their supporting
examples, and deleting an undesirable demonstration does not remove its
influence without updating the fitted model. Generalization to new settings
also remains task-dependent. Lookup-based diagnostics show that direct
trajectory retrieval can be competitive with diffusion policies~\cite{he2025demystifying}, motivating a complementary
approach that retains demonstrations at deployment. Our goal is to make
robot capabilities easier to deploy and revise when compute and fitting
time are limited.

Retrieval policies keep demonstrations explicit, select relevant behavior
at inference time, and adapt it to the live
query~\cite{pari2022vinn,he2025demystifying,pfeifer2026darp}. This exposes
the supporting data and makes the demonstration bank a direct point of
intervention. The central challenge is query--demonstration mismatch: the
robot's evolving observation--action history need not coincide exactly with a
recorded trajectory. Selecting or averaging nearby actions does not
explicitly reconcile that history with the behavior that follows it. We
instead use the retrieved trajectories as a local model for action
continuation, reconstructing the observed past and carrying the resulting
coefficients into the demonstrated future.

Behavioral systems theory~\cite{willems2005,markovsky2021tutorial} provides
the motivation for this construction. Under appropriate linearity and
excitation assumptions, a Hankel matrix of recorded data spans a system's
valid trajectories. Data-enabled Predictive Control
(DeePC)~\cite{coulson2019deepc,coulson2019regularized} uses this
representation to solve a reference-tracking problem to optimize future behavior consistent with an observed
history, without having to identify a parametric dynamics model. Behavior
cloning poses a different question: which expert action should follow the
current history, without a supplied reference trajectory or reward? Viewing
demonstrations as recordings of an expert-controlled system turns this
question into trajectory completion. Expert data need not satisfy the
fundamental lemma's assumptions, so the linear representation motivates,
rather than guarantees, continuation from arbitrary demonstrations.

We introduce Behavior Predictive Control (BPC), which combines a fitted
retrieval metric, a regularized Hankel-based action-continuation prior, and
a closed-form one-step residual correction. The metric selects windows
whose histories are informative about future actions. BPC then reconstructs
the live history over those windows and uses the same coefficients to
predict their action continuation; a residual fitted on the same support
corrects the action to account for nonlinear or higher-order effects. Frozen pretrained visual features extend the
representation to visuomotor control. In this work, we find that locality and regularization can address the 
nonlinear regime, and provide analysis relating next-action error to history
reconstruction and local approximation errors.

BPC constructs policies without end-to-end policy training while retaining
their supporting trajectories. The retrieved window positions and
coefficients also provide an intrinsic estimate of task progress (\Cref{fig:bpc_overview}). Simulation
and hardware results show competitive performance, generalizing across tasks and embodiments.
Reported configurations reduce policy fitting from hours to seconds on
consumer GPUs (and works even on onboard robot hardware), and support control
at 75 Hz on a Jetson Orin Nano. This combination makes demonstration-based
policy construction practical with modest compute while preserving access
to the data that determines behavior.

\section{Related Work}
\begin{table*}[t]
\centering
\caption{Policy archetype tradeoffs.}
\label{tab:policy-tradeoffs}
\footnotesize
\setlength{\tabcolsep}{3pt}
\renewcommand{\arraystretch}{1.05}
\setlength{\extrarowheight}{0pt}
\definecolor{policyaccent}{HTML}{243F68}
\begin{tabular*}{\textwidth}{
  >{\raggedright\arraybackslash}p{\dimexpr0.14\textwidth-2\tabcolsep\relax}
  >{\centering\arraybackslash}p{\dimexpr0.105\textwidth-2\tabcolsep\relax}
  >{\raggedright\arraybackslash}p{\dimexpr0.175\textwidth-2\tabcolsep\relax}
  >{\raggedright\arraybackslash}p{\dimexpr0.135\textwidth-2\tabcolsep\relax}
  >{\raggedright\arraybackslash}p{\dimexpr0.19\textwidth-2\tabcolsep\relax}
  >{\centering\arraybackslash}p{\dimexpr0.12\textwidth-2\tabcolsep\relax}
  >{\centering\arraybackslash}p{\dimexpr0.135\textwidth-2\tabcolsep\relax}}
\toprule
\rowcolor{policyaccent!9}
\textbf{Policy family} &  \textbf{Fit time} &
\textbf{Inference} &  \textbf{Memory} &
\textbf{Pretraining?} &  \textbf{Performance} &  \textbf{Interpretable?} \tabularnewline
\midrule
 \textbf{Learned}\newline\cite{chi2023diffusion,zhao2023act,kim2024openvla}
&  Hours
&  Slow; real-time may need action chunking
&  High; may need server GPUs
&  Frequently
&  Strong
&  Opaque\tabularnewline
\addlinespace[2pt]
 \textbf{Retrieval}\newline\cite{pari2022vinn,pfeifer2026darp}
&  Minutes
&  Fast
&  Low
&  Pretrained encoder required
&  Lower
&  High\tabularnewline
\addlinespace[2pt]
\rowcolor{policyaccent!5}
\textbf{BPC (ours)}
&  Seconds
&  Fast; $75$ Hz on Jetson Orin Nano
&  Low
&  Optional; raw pixels may suffice
&  Strong
&  High \tabularnewline
\bottomrule
\end{tabular*}
\end{table*}

\paragraph*{Behavioral systems theory and trajectory completion}
Willems' fundamental lemma~\cite{willems2005} motivates representing
trajectories through Hankel data matrices under linearity and excitation
assumptions. DeePC uses this representation for predictive
control~\cite{coulson2019deepc}, with regularization addressing uncertainty
and robustness~\cite{coulson2019regularized,huang2021quadratic}.
Trajectory-template methods~\cite{depersis2026templates} frame prediction
as completing a trajectory from its observed past. BPC adapts this viewpoint
to expert action continuation: retrieved demonstration windows pair an
observed history with the actions that follow it. In fact, BPC is derived constructively, grounded in nonlinear analysis
relating next-action error to history reconstruction and local approximation, rather than assuming automatic satisfaction of the fundamental lemma. 

\paragraph*{Local and latent DeePC} Previous works have shown that locality and compression allow methods based on the fundamental lemma to extend to nonlinear scenarios.
SVD compression~\cite{zhang2022dimreduction} and sensitivity or datamodel-based column selection~\cite{zhang2025rds,taskaware2025} reduce the cost of data-driven control. Nonlinear extensions use basis lifts~\cite{lazar2023basis}, kernels~\cite{huang2022kernelized}, or learned operators~\cite{deepdeepc2024}. Neural DeePC~\cite{lazar2024neuraldeepc} learns a neural feature basis from system trajectories for online affine interpolation. BPC is flexible in its inputs (from pretrained encoders to raw pixels) and selects demonstration windows through a history distance fitted using demonstrated continuations.

\paragraph*{Parametric behavior cloning} 
MLP action regressors~\cite{pmlr-v164-mandlekar22a}, Diffusion Policy~\cite{chi2023diffusion}, ACT~\cite{zhao2023act}, flow-based policies~\cite{black2024pi0,zhang2024flowpolicy}, and VLAs~\cite{brohan2023rt2,kim2024openvla} fit neural action predictors from demonstrations and have been applied to tabletop manipulation, dexterous manipulation~\cite{wang2024sdp}, and drone navigation~\cite{low2025sous,tucker2026pi}. Lookup-based diagnostics~\cite{he2025demystifying} show
that direct trajectory retrieval can be competitive with diffusion policies, motivating explicit use of demonstrations at deployment. BPC retains this access while fitting a lightweight retrieval metric and residual, with its action prior computed directly from local trajectory data. 

\paragraph*{Retrieval-based imitation and residual correction}
VINN~\cite{pari2022vinn} retrieves actions through visual embeddings, while Behavior Retrieval~\cite{du2023retrieval} selects relevant data to support few-shot imitation. DARP~\cite{pfeifer2026darp} conditions a learned policy on retrieved expert states, actions, and differences from the query. 
BPC combines local retrieval with explicit history reconstruction, and we show empirically and theoretically that this nonparametric history conditioned policy is more robust than common alternative nonparametric priors such as uniform and weighted $k$-nearest-neighbors. Table~\ref{tab:policy-tradeoffs} summarizes the practical tradeoffs between
learned policies, retrieval policies, and BPC.

\makeatletter
\@ifundefined{bpcproposition}{\newtheorem{bpcproposition}{Proposition}}{}
\@ifundefined{bpccorollary}{\newtheorem{bpccorollary}[bpcproposition]{Corollary}}{}
\@ifundefined{bpcproblem}{
  \theoremstyle{definition}
  \newtheorem{bpcproblem}{Problem}
  \theoremstyle{plain}
}{}
\makeatother

\section{Regularized Local Action Continuation}
\label{sec:bpc-preliminaries}

We adapt the trajectory-completion problem of De Persis and
Tesi~\cite[Sec.~II-A, Problem~1]{depersis2026templates} to behavior cloning:
demonstration data constitutes a library of behaviors. Rather than predicting \emph{future observations} based on a history of observations and executed actions, we seek to predict the expert \emph{action} that should follow the observed
history. 

\paragraph*{Generating behavior}
Time indices use parenthesized superscripts; library-column and
demonstration indices use subscripts. Consider a closed-loop plant operated by an unknown
expert
\begin{equation}
\begin{aligned}
x^{(t+1)}&=f_{\mathrm{CL}}(x^{(t)}) = f_{\mathrm{plant}}(x^{(t)}, u^{(t)}) ,\\
 y^{(t)}&=f_y(x^{(t)}),\qquad u^{(t)}=f_u(x^{(t)}),
\end{aligned}
\label{eq:bpc-generating-behavior}
\end{equation}
where $y^{(t)}\in\mathbb R^{n_y}$ is the observation representation
and $u^{(t)}\in\mathbb R^{n_u}$ is the expert command. None of these mappings or
hidden states $x^{(t)}$ is accessible.

\paragraph*{Demonstration templates}
The available data are finite observation--action recordings
$\mathcal D=\{(y_d^{(t)},u_d^{(t)})_{t=0}^{T_d-1}\}_{d=1}^{N_{\mathrm{demo}}}$.
For a history length $T_{\mathrm{ini}}$ and continuation horizon $T_f$, split
an admissible window at decision time $t$ into
\begin{equation}
\begin{aligned}
 z^{(t)}&=\operatorname{col}\bigl(
 u^{(t-T_{\mathrm{ini}}:t-1)},\,
 y^{(t-T_{\mathrm{ini}}+1:t)}\bigr),\\
 u_f^{(t),\star}&=\operatorname{col}(u^{(t),\star},\ldots,u^{(t+T_f-1),\star}),\\
 w^{(t),\star}&=\operatorname{col}(z^{(t)},u_f^{(t),\star}),\qquad
 m_p=T_{\mathrm{ini}}(n_u+n_y).
\end{aligned}
\label{eq:bpc-problem-window}
\end{equation}
Here $\operatorname{col}$ stacks vectors in the displayed order; the star
marks the expert continuation to be inferred. Concatenating the
per-demonstration Hankel blocks, without crossing demonstration boundaries,
gives the library
\begin{equation}
 \mathcal H_{\mathcal D}=[w_1\ \cdots\ w_N]
 =\begin{bmatrix}H\\ U\end{bmatrix}
 \in\mathbb R^{(m_p+T_f n_u)\times N}.
\label{eq:bpc-problem-library}
\end{equation}

\begin{bpcproblem}[Behavior cloning from trajectory templates]
\label{prob:bpc-action-completion}
Given demonstrations $\mathcal D$, organized into the library
$\mathcal H_{\mathcal D}$, and an observed history
$z^{(t)}$ (which we call the \emph{query}), construct a policy
\begin{equation}
 \widehat\pi_{\mathcal D}:\mathbb R^{m_p}\to\mathcal U,\qquad
 \widehat u^{(t)}=\widehat\pi_{\mathcal D}(z^{(t)}),
\label{eq:bpc-problem-policy}
\end{equation}
where $\mathcal U$ is the admissible action set, to estimate
$u^{(t),\star}=u_f^{(t),\star}\left[1{:}n_u\right]$.
For the stacked future bank, $U^{(k)}$ denotes the $k$th future-time block
(the $n_u$ rows for future step $k$, across all columns). Thus the
one-step action bank is $A=U^{(1)}$. Bracket indices select vector entries.
The goal is to reduce the next-action error
$\|\widehat u^{(t)}-u^{(t),\star}\|$. At deployment, the demonstrations and measured
history are available, but the query's future, an expert-action oracle, a
reward, and a reference trajectory are not. We assume that a compatible
history identifies the expert's next action. This identifiability assumption
does not imply that the library covers every history visited by the learner.
\end{bpcproblem}

\subsection{Policy construction}
Let $z\in\mathbb R^{m_p}$ be the query and let $h_i$ and
$u_i^+\in\mathbb R^{T_f n_u}$ be the history and continuation of each of $K$
selected templates. Locally, $H$, $U$, and $A$ denote these submatrices:
\begin{equation}
\begin{aligned}
 H&=[h_1\ \cdots\ h_K],& U&=[u_1^+\ \cdots\ u_K^+],\\
 A&=U^{(1)},&g_0&=\mathbf1/K.
\end{aligned}
\label{eq:bpc-local-dictionary}
\end{equation}
Selecting one demonstration, averaging several, and fitting a local continuation all amount to choosing coefficients \(g\). These reconstruct the history \(Hg\) and predict the continuation \(Ug\), whose first action is \(Ag\). We express these choices as

\begin{equation}
\label{eq:base-policy}
\begin{aligned}
 g^\star&\in\underset{\mathbf 1^\top g=1}{\operatorname{argmin}}
 \;\frac12\|H g-z\|^2+\lambda_g\|g\|,\\
 G(z)&=A g^\star,\qquad \lambda_g\geq0,
\end{aligned}
\end{equation}
which is similar in form to constraints imposed by \cite{depersis2026templates, coulson2019deepc}. The following discussion illuminates why certain constraints on the continuation (in particular affine ones) may perform better than their nearest neighbor counterparts.

\subsection{Local approximation with signed coefficients}
\label{subsec:bpc-local-bound}
\begin{proposition}[Local continuation error]
\label{prop:bpc-local-error}
Let $\pi$ be continuously differentiable on an open convex neighborhood
containing $z$ and the retrieved histories, with $\beta$-Lipschitz Jacobian. For any $g\in\mathbb R^K$, define
\begin{equation}
 e=Hg-z,\quad s=\mathbf1^\top g,\quad
 J_z=D\pi(z),\quad b_z=\pi(z)-J_z z.
\label{eq:bpc-error-quantities}
\end{equation}
Then
\begin{equation}
\begin{aligned}
 \|Ag-\pi(z)\|\leq{}&\|J_z e\|+|s-1|\,\|b_z\|\\
 &+\frac\beta2\sum_i|g_i|\,\|h_i-z\|^2.
\end{aligned}
\label{eq:bpc-local-error-bound}
\end{equation}
\end{proposition}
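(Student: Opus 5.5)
The bound follows from a first-order Taylor expansion of $\pi$ about the query $z$, applied to each retrieved history and combined linearly with the coefficients $g_i$. We read the action bank as consistent with the expert policy on the templates, i.e.\ the $i$th column of $A$ is $a_i=\pi(h_i)$; this is the identifiability assumption of Problem~\ref{prob:bpc-action-completion} restricted to the library. Then $Ag=\sum_i g_i\,\pi(h_i)$, and everything reduces to comparing $\sum_i g_i\pi(h_i)$ with $\pi(z)$.

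\textbf{Step 1 (per-template remainder).} For each $i$ write $\pi(h_i)=\pi(z)+J_z(h_i-z)+r_i$. Since $\pi$ is $C^1$ on an open convex set containing $z$ and $h_i$, the integral form of the mean value theorem gives
\[
r_i=\int_0^1\bigl(D\pi(z+\tau(h_i-z))-J_z\bigr)(h_i-z)\,d\tau .
\]
The $\beta$-Lipschitz Jacobian then yields $\|r_i\|\le\int_0^1\beta\tau\,d\tau\,\|h_i-z\|^2=\tfrac\beta2\|h_i-z\|^2$. Convexity of the neighborhood is used exactly here, so that the whole segment lies in the domain.

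\textbf{Step 2 (affine recombination).} Using $\pi(z)+J_z(h_i-z)=J_zh_i+b_z$, summing with the weights $g_i$ gives
\[
Ag=J_zHg+s\,b_z+\sum_i g_ir_i .
\]
Since $\pi(z)=J_zz+b_z$, subtracting gives
\[
Ag-\pi(z)=J_z(Hg-z)+(s-1)b_z+\sum_ig_ir_i=J_ze+(s-1)b_z+\sum_ig_ir_i .
\]

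\textbf{Step 3 (triangle inequality).} Apply the triangle inequality to the three terms and use $\|\sum_ig_ir_i\|\le\sum_i|g_i|\,\|r_i\|$ together with Step~1. This yields \eqref{eq:bpc-local-error-bound}. The absolute values $|g_i|$ are what make the bound valid for signed coefficients, which is the regime of \eqref{eq:base-policy}.

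\textbf{Main obstacle.} The only delicate point is justifying $a_i=\pi(h_i)$, i.e.\ that the library actions are samples of the same map $\pi$ whose smoothness is assumed. If the paper intends $\pi$ as the expert map from histories to next actions, this is immediate from the construction of the windows in \eqref{eq:bpc-problem-window}. If it is not, I would state it as a standing hypothesis, or add a separate term $\sum_i|g_i|\,\|a_i-\pi(h_i)\|$, which vanishes for noiseless demonstrations. The Taylor remainder estimate itself is routine.
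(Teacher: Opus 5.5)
Your proof is correct and follows the paper's argument. The paper likewise Taylor-expands $\pi$ about $z$ at each $h_i$ with $\|r_i\|\le\frac{\beta}{2}\|h_i-z\|^2$, recombines with the weights to get $Ag-\pi(z)=J_ze+(s-1)b_z+\sum_i g_ir_i$, and takes norms; your explicit assumption $a_i=\pi(h_i)$ and your integral-form remainder spell out what the paper's two-line proof uses implicitly.
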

\begin{proof}
Taylor expansion gives $\pi(h_i)=\pi(z)+J_z(h_i-z)+r_i$,
with $\|r_i\|\leq\beta\|h_i-z\|^2/2$. Consequently,
\begin{equation}
 Ag-\pi(z)=J_z e+(s-1)b_z+\sum_i g_i r_i.
\label{eq:bpc-taylor-identity}
\end{equation}
Taking norms proves the claim.
\end{proof}

The terms in (\ref{eq:bpc-local-error-bound}) identify three sources of error: action-relevant history mismatch, affine mass inconsistency, and curvature. Under specific constraints on $g$, some of these terms may collapse (Corollary 1).

\begin{corollary}[Affine continuation]
If $\mathbf1^\top g(z)=1$, then
\begin{equation}
\begin{aligned}
 \|Ag(z)-\pi(z)\|
 & \leq \|J_z(Hg(z)-z)\|
       +\frac{\beta\rho^2}{2}\|g(z)\|_1.
\label{eq:bpc-affine-bound}
\end{aligned}
\end{equation}
\end{corollary}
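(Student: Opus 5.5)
This is a direct specialization of Proposition~\ref{prop:bpc-local-error}. We first fix the meaning of $\rho$, which the statement leaves implicit. We read it as a retrieval radius:
\[
\rho \ge \max_i \|h_i - z\|,
\]
so that every retrieved history lies within distance $\rho$ of the query. This is the quantity the locality of retrieval controls. We also keep the standing hypotheses of Proposition~\ref{prop:bpc-local-error}: $\pi$ is $C^1$ with a $\beta$-Lipschitz Jacobian on an open convex set containing $z$ and the $h_i$.

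The plan has three steps, all applied to the bound \eqref{eq:bpc-local-error-bound} with $g=g(z)$.

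\textbf{Step 1: remove the affine mass term.} The hypothesis $\mathbf1^\top g(z)=1$ gives $s=1$. Hence the term $|s-1|\,\|b_z\|$ vanishes. This holds whatever the size of the affine offset $b_z=\pi(z)-J_z z$. This is the structural point of the corollary: the affine constraint cancels the mass-inconsistency error exactly. Unnormalized coefficients would instead pay a penalty proportional to $\|b_z\|$.

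\textbf{Step 2: rewrite the mismatch term.} By definition $e=Hg(z)-z$, so the first term $\|J_z e\|$ is literally $\|J_z(Hg(z)-z)\|$. It needs no further estimation.

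\textbf{Step 3: bound the curvature term.} Using $\|h_i-z\|^2\le\rho^2$ for each $i$,
\[
\frac\beta2\sum_i|g_i|\,\|h_i-z\|^2 \;\le\; \frac{\beta\rho^2}{2}\sum_i|g_i| \;=\; \frac{\beta\rho^2}{2}\|g(z)\|_1 .
\]
Adding the three pieces gives \eqref{eq:bpc-affine-bound}.

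No step is a genuine obstacle. The only point needing care is the definition of $\rho$, and we would state it explicitly before the corollary. If one prefers not to introduce $\rho$, the sharper weighted sum $\frac\beta2\sum_i|g_i|\,\|h_i-z\|^2$ can be kept in place of the last term.

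Two interpretive remarks are worth adding after the proof. First, signed affine coefficients may have $\|g\|_1>1$, which amplifies curvature. This motivates the regularizer $\lambda_g\|g\|$ in \eqref{eq:base-policy}. Second, for convex weights, as in $k$-nearest-neighbor averaging, $\|g\|_1=1$, but the mismatch term $\|J_z(Hg-z)\|$ is then not minimized. This is the trade-off the corollary exposes.
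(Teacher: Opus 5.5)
Your proposal is correct and follows the same route as the paper, which gives no separate proof: apply Proposition~\ref{prop:bpc-local-error} with $s=1$ so the $|s-1|\,\|b_z\|$ term vanishes, then bound each $\|h_i-z\|^2$ by $\rho^2$ in the curvature term. The paper defines $\rho=\max_i\|h_i-z\|$ in the paragraph right after the corollary, so your reading $\rho\ge\max_i\|h_i-z\|$ matches it and is a harmless relaxation.
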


\Cref{eq:bpc-affine-bound} suggests that the demonstrations we use to predict expert actions should supply a local support for correcting the live history, rather than merely a collection of similar demonstrations. The Jacobian \(J_z\) determines which history differences translate into action error, motivating a retrieval metric (i.e. how to choose the best demonstrations from $\mathcal{D}$) that emphasizes differences predictive of the expert's continuation rather than observational similarity. The residual \(Hg-z\), meanwhile, motivates selecting histories whose complementary variations can explain the query, particularly along these action-relevant directions. This reconstruction must be balanced against the curvature cost \(\beta\rho^2\|g\|_1\): \(\beta\) bounds how rapidly the history-to-action Jacobian varies, while \(\rho=\max_i\|h_i-z\|\) measures how far the retrieved histories extend from the query. Nearby histories reduce this radius and may admit a tighter local curvature bound. However, if the histories are nearly redundant, or the query lies far beyond their available spread, reconstruction may require large positive and negative coefficients. The resulting increase in \(\|g\|_1\) amplifies the Taylor remainders; this term measures signed coefficient amplification. A favorable support should therefore be local and action-aware, yet retain enough variation in the directions needed to reconstruct the query without excessive cancellation.

This balance explains why the smallest neighborhood is not necessarily the most useful. A single neighbor has \(\|g\|_1=1\), but cannot correct any history mismatch remaining. Uniform or distance-weighted \(k\)NN averaging also keeps \(\|g\|_1=1\), yet its weights do not explicitly fit the live history: adding neighbors can enlarge the radius over which curvature contributes without eliminating the first-order mismatch. Moreover, nonnegative averaging cannot reconstruct a query outside the retrieved convex hull. Affine continuation instead uses differences among demonstrated histories to reduce that mismatch, including through modest extrapolation, at the cost of potentially greater coefficient amplification. Retrieval and affine continuation are thus complementary: a selective but sufficiently rich neighborhood can make first-order correction possible while keeping its nonlinear cost small. The objective is not maximal sparsity or proximity alone, but a support whose histories are jointly useful for regularized reconstruction. Since \(J_z\) is unknown, demonstrated continuations provide a natural supervision signal for learning the action-aware component of this retrieval metric, motivating the future-informed retrieval developed next.

\begin{figure}[]
    \centering
    \captionsetup{font=footnotesize}
\includegraphics[width=0.5\textwidth]{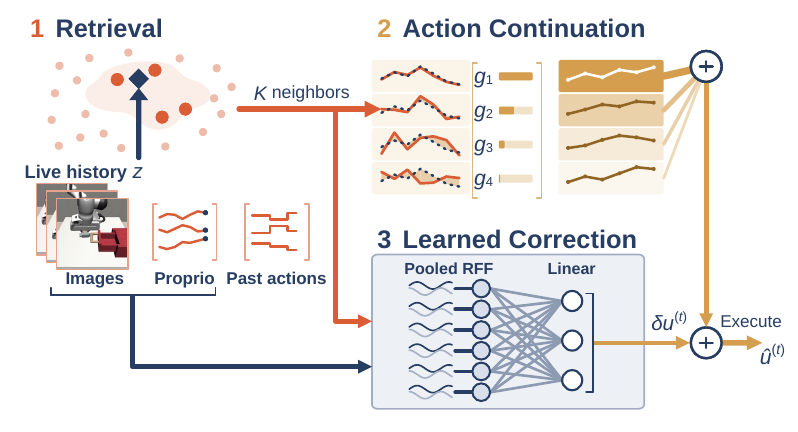}%
    \caption{\textbf{Online BPC pipeline.}
    (1) A learned metric retrieves $K$ demonstration windows from the live
    observation--action history $z$.
    (2) Regularized reconstruction fits coefficients $g_i$ to their past
    histories and applies them to future action blocks to form the action
    prior. (3) Random Fourier features of retrieved observations, actions,
    and observation differences from the query are uniformly pooled.
    A linear head predicts the correction $\delta u^{(t)}$. The action prior is added to the correction to form the outgoing control action.}
    \label{fig:bpc_pipeline}
    \vspace{-15pt}
\end{figure}

\makeatletter
\@ifundefined{ifbpcincludeactioncalibration}{%
  \newif\ifbpcincludeactioncalibration
  \bpcincludeactioncalibrationfalse
}{}
\makeatother

\section{Extension to Nonlinear Behaviors}
\label{sec:bpc-method}

This perspective motivates two complementary components: future-informed retrieval to select the support for local continuation, and a shared-support nonlinear residual to correct errors that remain. Because the action sensitivity \(J_z\) is unknown, we use demonstrated continuations to fit a retrieval metric that favors histories predictive of valid future actions, while requiring only the observed history at deployment. Such retrieval seeks a suitable neighborhood for the affine prior, but does not eliminate errors from curvature or imperfect reconstruction. We therefore augment the prior with a nonlinear correction conditioned on the same retrieved demonstrations and their differences from the current observation. Fixed random Fourier features supply the nonlinear representation, while ridge regression fits the correction weights without training a nonlinear network.

\begin{algorithm}[t]
\caption{Behavior Predictive Control}
\label{alg:bpc-online}
\footnotesize
\begin{algorithmic}[1]
\Require Demonstration bank $\mathcal{D}$ and current history $z$.
\State \textbf{Fit (offline):} fit retriever (\ref{eq:bpc-retrieval-geometry}), then fit $W$ to residual targets (\ref{eq:bpc-ridge-objective}).
\Statex \textit{Online, at each time step:}
\State \textbf{Retrieval:} select candidates $\mathcal N$ using action ridge (\ref{eq:bpc-action-ridge-fit})  or sparsemax-LDA (\ref{eq:bpc-lda-statistics}).
\State Assemble local candidate banks $H,U$; set $A\gets U^{(1)}$.
\State \textbf{Continuation:} fit $g$ to $Hg\approx z$ (\ref{eq:base-policy}).
\State \textbf{Correction:} pool $\bar\varphi$ over candidate bank~\eqref{eq:bpc-pooled-feature}.
\State $\hat{u}\gets Ag+W^\top\bar\varphi$.
\end{algorithmic}
\end{algorithm}
\subsection{Future-informed retrieval}
\label{subsec:bpc-retrieval}

There exist many ways to perform retrieval, so we ground the discussion by providing a unifying framework. Given the current ongoing history $z$ and the data bank of histories $H$, we would like to find a
transformation $L$ and a squared dissimilarity $d(\cdot)$ that appropriately ranks columns in $\mathcal{H}_\mathcal{D}$. In particular, we use a squared Mahalanobis dissimilarity~\cite{weinberger2009distance} to compare the observed history to the training dataset in some feature space
\begin{equation}
\begin{aligned}
 \Psi(z)&=L^\top\psi(z),\qquad \Phi(h)=L^\top\phi(h),\\
 d_i(z)&=\|\Psi(z)-\Phi(h_i)\|^2,
\end{aligned}
\label{eq:bpc-retrieval-geometry}
\end{equation}
where $\psi$ is a potentially nonlinear history feature map and $\phi_i$
is the feature representation of candidate $i$.
Retrieval favors candidates with smaller distances in this transformed space~\cite{snell2017prototypical}. 

We propose two different retrieval pipelines based on \Cref{eq:bpc-retrieval-geometry}: action ridge regression and linear discriminant analysis (LDA). Either one can be used depending on the nature of the data.

\paragraph*{Action ridge}
The action-aware ridge regression fits the full  history $H$ and future-action
banks $U$:
\begin{equation}
\begin{aligned}
 L&=\underset{\hat{L}}{\operatorname{argmin}}\;
 \|\hat{L}^\top H-U\|_F^2+\lambda\|\hat{L}\|_F^2\\
  &=(HH^\top+\lambda I)^{-1}HU^\top,\\
  \psi(z) &= z, \qquad \phi(h_i) = h_i.
\end{aligned}
\label{eq:bpc-action-ridge-fit}
\end{equation}
Here $\lambda>0$ regularizes the predictor. Equation~\eqref{eq:bpc-retrieval-geometry}
then compares the \emph{predicted future actions} of the query and bank
history. Histories are close when the predictor expects similar
continuations. Note that this linear prediction is only used for retrieval (i.e. ranking) and not for action prediction.

\paragraph*{Linear discriminant analysis}
LDA compares demonstrated futures with one another.
Let $d_{ij}$ be the squared distance between the future following training
history $h_i$ and the future following $h_j$. Collect these distances
in $d_i$ and apply sparsemax~\cite{martins2016sparsemax}:
\begin{equation}
 T_{ij}=\bigl[\operatorname{sparsemax}(-d_i)\bigr]_j.
\label{eq:bpc-action-teacher}
\end{equation}
These nonnegative weights sum to one over $j$ and are high when futures are similar.
For each window $i$, we use these weights to average history features
$\varphi(h_j)$ to create window $i$'s feature $\phi_i$. We use random Fourier features \cite{rahimi2007random} for $\varphi(\cdot)$. Note that $\phi$ here does not explicitly depend on $h_i$, but rather the histories surrounding it, hence we denote the feature as $\phi_i$ rather than $\phi(h_i)$. 

The feature and covariance are defined as
\begin{equation}
\begin{aligned}
 \phi_i&=\frac{\sum_j T_{ij}\varphi(h_j)}{\sum_j T_{ij}},\\
 \Sigma&=\eta I + \frac1n\sum_{i,j}T_{ij}
 (\varphi(h_j)-\phi_i)(\varphi(h_j)-\phi_i)^\top,\\
 L &= \Sigma^{-1/2}, \qquad \psi(z) = \varphi(z),
\end{aligned}
\label{eq:bpc-lda-statistics}
\end{equation}
where $\eta>0$ regularizes the covariance. Thus $\phi_i$ summarizes the
pasts associated with continuation $i$, while $\Sigma$ measures how their
features vary.
Applying $L=\Sigma^{-1/2}$ gives less weight to directions with large
variation~\cite[Sec.~4.3]{hastie2009elements}.

Intuitively, action ridge is a natural choice when differences in future actions are well captured by a linear map of the history. LDA is a natural choice when histories with similar futures exhibit substantial variation.

\paragraph*{Selecting the local dictionary}
Both fits assign a distance to each candidate continuation. Action ridge
retains the $K$ nearest candidates. LDA converts negative distances to selection
weights using sparsemax: $ q(z)=\operatorname{sparsemax}(-\alpha d(z))$,
 where the scale $\alpha\ge0$. We retain every candidate with $q_i>0$, so the number selected can vary with the query. 

\subsection{Composing Retrieval Distributions}
Both retrieval methods above score candidates through a distance metric $d_i(z)$. Re-interpreting that score as a marginal distribution $p(i\mid z)$, taking any decreasing function of $d_i(z)$, lets additional modes of the retrieved data enter as further marginals and combine by multiplication. Any modality whose structure is known can then be catered to the nature of the task -- a flexibility unavailable to a parametric policy whose inductive biases are fixed at architecture design time. For example, in non-Markovian tasks (See Section~\ref{subsec:manipulation-hardware}), a second factor over task progress can be introduced. We define $\tilde t_i\in[0,1]$ as candidate $i$'s normalized progress within its own demonstration and remain stateful by bootstrapping an estimate $\hat t$ to the $g$-weighted mean progress of the previous step's supports, information that is discarded by a parametric policy. Formally, the task-progress prior is a Laplace distribution and the posterior then becomes:
\begin{equation}
p(i\mid z,\hat t)\;\propto\;
p(i\mid z)\,
\exp\!\left(-\frac{\lvert\tilde t_i-\hat t\rvert}{\tau}\right).
\label{eq:bpc-phase-posterior}
\end{equation}
Selection then proceeds as before on this history-aware re-ranked posterior.

\subsection{Shared-support nonlinear residual}
\label{subsec:bpc-residual}

The continuation prior can leave action errors even when its history fit
is accurate (\ref{eq:bpc-affine-bound}). Inspired by difference-aware retrieval~\cite{pfeifer2026darp},
we fit a correction using the same retrieved candidates.

Let $y$ be the latest observation in query $z$
and $\mathcal N$ the nonempty set of retrieved candidates. For each candidate $i$, let $y_i$ be
its most recent observation and $a_i$ its first action thereafter.
Define
\begin{equation}
 x_i=\operatorname{col}\left(y_i,\ a_i,\ \frac{y_i-y}{\sqrt2}\right).
\label{eq:bpc-residual-evidence}
\end{equation}
We employ random Fourier features as the nonlinear representation~\cite{rahimi2007random},
\begin{equation}
 \varphi(x)=\sqrt{\frac2D}\cos(\Omega^\top x+\vartheta),
\label{eq:bpc-rff}
\end{equation}
with independent frequency columns drawn from
$\mathcal N(0,\sigma_\varphi^{-2}I)$ and phases from
$\operatorname{Unif}[0,2\pi]$. Pool uniformly over the retrieved candidates:
\begin{equation}
 \bar\varphi=\frac1{|\mathcal N|}\sum_{i\in\mathcal N}\varphi(x_i).
\label{eq:bpc-pooled-feature}
\end{equation}
The correction model is simply $W^\top\bar\varphi$, where
$W\in\mathbb R^{D\times n_u}$.

\paragraph*{Fitting the correction}
For each eligible training pair $(h_i,a_i)$, exclude candidates within the same demonstration that have temporal overlap with the training pair. Let $G^{-i}$ denote the  continuation prior on
this filtered support, and define
\begin{equation}
 r_i=a_i-G^{-i}(h_i),
\label{eq:bpc-residual-target}
\end{equation}
an in-dataset measure of how well the action-continuation model can reconstruct expert actions without having access to highly correlated candidates to the query. The pooled feature on this filtered support is computed the same way as (\ref{eq:bpc-pooled-feature}). 

For all $i$ in $\mathcal{D}$, stack $\bar\varphi_i^\top$ and $r_i^\top$ as the rows of $X$ and $R$.
For $\lambda>0$, fit
\begin{equation}
\begin{aligned}
 W&=\underset{\widehat W}{\operatorname{argmin}}\;
 \|X\widehat W-R\|_F^2+\lambda\|\widehat W\|_F^2,\\
 &= (X^\top X+\lambda I)^{-1} X^\top R.
\label{eq:bpc-ridge-objective}
\end{aligned}
\end{equation}

The BPC model is then
\begin{equation}
 \hat{u}=Ag+W^\top\bar\varphi.
\end{equation}

In summary, Behavior Predictive Control (Fig. \ref{fig:bpc_pipeline}, Alg. \ref{alg:bpc-online}) consists of three compositional components: (1) Action Aware Retrieval, (2) Local Action Continuation, and (3) Learned Residual Correction. Retrieval supplies support for local action continuation, while learned residual correction adjusts the predicted action to account for errors that remain after history fitting.

\section{Results}

\subsection{Tabletop Manipulation}
\label{subsec:manipulation-hardware}
We test BPC on several hardware manipulation tasks using the UFactory xArm6. Full task descriptions are as follows:
\begin{itemize}
    \item \textbf{Coffee}: Pick a bag of beans from a 30 cm by 30 cm area and place it on the drawer.
    \item \textbf{Drawer}: Open the drawer using the small handle, pick the sponge, place the sponge in the drawer, and then close the drawer.
    \item \textbf{Switch}: Pick the bag of beans and place it on the opposite plate (either the left or right side).
\end{itemize}
We compare our policy to $\pi_{0.5}$ \cite{pi05_2025} fine-tuned on the same demonstrations per task, with overall success rates denoted in Table~\ref{tab:tabletop-hardware}. All methods only use visuomotor information (wrist and external cameras and proprioceptive). While $\pi_{0.5}$ took $20$ hours to train, BPC takes only 30 to 120 seconds (depending on demonstration count) to fit on a $24$ GB RTX 4090. Crucially, we found using $16 \times 16$ \emph{raw pixels} (i.e. no image encoding) was sufficient for BPC to perform well. For the hardware manipulation experiments, we elected to use \emph{action ridge} retrieval, but all subsequent results utilize \emph{LDA}. 

In \textbf{Coffee}, both policies ground their actions in the varying bag position, achieving 36 and 39 successes out of 40 trials, respectively. In \textbf{Drawer}, both policies stay on-manifold for the minute-long task and achieve 38 and 40 successes out of 40 trials, respectively. In \textbf{Switch}, $\pi_{0.5}$'s performance degrades due to an architectural lack of memory, leading to a known phenomenon in single-observation behavior-cloning called \textit{perceptual aliasing} \cite{osterberg2026unimemunifyingmultimodalmemory}. In this case, once in the middle of transporting the bag, $\pi_{0.5}$ cannot remember which plate it originated from. Our method offers solutions to non-Markovianity in two ways: (1) retrieving over a history of states to maintain short-term orientation and (2) temporal filtering to disambiguate long-term progress. The first solution largely accounts for BPC's superior performance (34 versus 8 successes out of 40 trials) on the \textbf{Switch} task.

\begin{table}[t]
  \centering
  \captionsetup{font=footnotesize}
  \caption{Tabletop manipulation hardware: qualitative task sequences and
  successes over number of trials.}
  \vspace{-3pt}
  \label{tab:tabletop-hardware}
  \footnotesize
  \setlength{\tabcolsep}{2pt}
  \renewcommand{\arraystretch}{1.1}
  \definecolor{hardwarestagefill}{HTML}{DFE2E7}
  \definecolor{hardwarestageink}{HTML}{273D62}
  \newcommand{\hardwareframe}[2]{%
    \begingroup
    \setlength{\unitlength}{\linewidth}%
    \begin{picture}(1,0.5625)
      \put(0,0){\includegraphics[width=\unitlength]{#2}}
      \put(0.10,0.4625){\color{hardwarestagefill}\circle*{0.15}}
      \put(0.10,0.4625){\makebox(0,0){%
        \color{hardwarestageink}\scriptsize\sffamily\bfseries #1}}
    \end{picture}%
    \endgroup}
  \begin{tabular}{@{}*{3}{>{\centering\arraybackslash}m{\dimexpr(\columnwidth-4\tabcolsep)/3\relax}}@{}}
    \toprule
    \textbf{Coffee} & \textbf{Drawer} & \textbf{Switch} \\
    \midrule
    \hardwareframe{1}{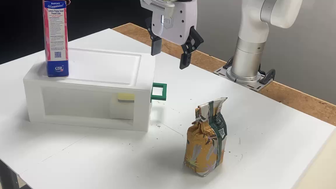}
    & \hardwareframe{1}{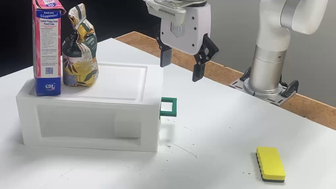}
    & \hardwareframe{1}{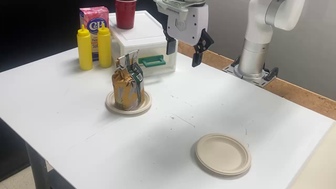} \\[2pt]
    \hardwareframe{2}{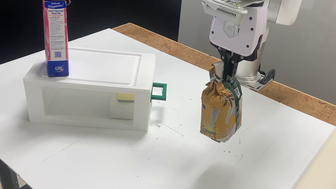}
    & \hardwareframe{2}{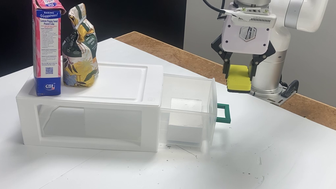}
    & \hardwareframe{2}{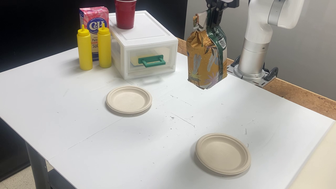} \\[2pt]
    \hardwareframe{3}{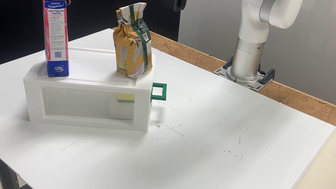}
    & \hardwareframe{3}{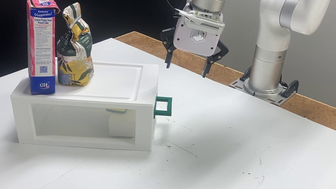}
    & \hardwareframe{3}{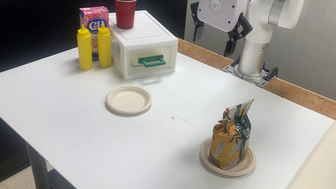} \\
    \bottomrule
  \end{tabular}
  \par\vspace{3pt}
  \begin{tabular}{@{}m{\dimexpr0.4\columnwidth-3\tabcolsep\relax}*{3}{>{\centering\arraybackslash}m{\dimexpr0.2\columnwidth-\tabcolsep\relax}}@{}}
    \textbf{Method} & \textbf{Coffee} & \textbf{Drawer} & \textbf{Switch} \\
    \midrule
    $\pi_{0.5}$ & \textbf{39/40} & \textbf{40/40} & 8/40 \\
    \rowcolor[HTML]{F4F5F7} BPC (Ours) & 36/40 & 38/40 & \textbf{34/40} \\
    \bottomrule
  \end{tabular}
  \vspace{-5pt}
\end{table}
\begin{table}[t]
  \centering
  \captionsetup{font=footnotesize}
  \caption{Success rates (\%) on simulation tabletop manipulation tasks over 300 rollouts per task.}
  \label{tab:tabletop-tasks}
  \fontsize{6}{7}\selectfont
  \setlength{\tabcolsep}{0.7pt}
  \renewcommand{\arraystretch}{1.05}
  \begin{tabular}{@{}>{\raggedright\arraybackslash}m{0.15\columnwidth}*{8}{>{\centering\arraybackslash}m{\dimexpr(0.79\columnwidth-18\tabcolsep)/8\relax}}>{\centering\arraybackslash}m{0.06\columnwidth}@{}}
      \toprule
      \textbf{Method} & \textbf{Square} & \textbf{Stack} & \textbf{Coffee} & \textbf{Hammer} & \textbf{Mug} & \textbf{Nut} & \textbf{Stack3} & \textbf{Thread} & \textbf{Mean} \\
      \midrule
       & \includegraphics[width=\linewidth]{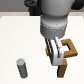}
      & \includegraphics[width=\linewidth]{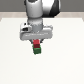}
      & \includegraphics[width=\linewidth]{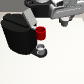}
      & \includegraphics[width=\linewidth]{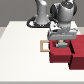}
      & \includegraphics[width=\linewidth]{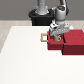}
      & \includegraphics[width=\linewidth]{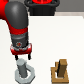}
      & \includegraphics[width=\linewidth]{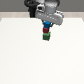}
      & \includegraphics[width=\linewidth]{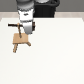} & \\
      \midrule
      SDP~\cite{wang2024sdp} & 74 & 99 & 83 & 98 & 70 & \textbf{42} & 76 & 65 & 76 \\
      $\pi0.5$ & \textbf{86} & \textbf{100} & 74 & \textbf{99} & 68 & 31 & \textbf{85} & 72 & \textbf{77} \\
      DARP~\cite{pfeifer2026darp} & 73 & 80 & 48 & \textbf{99} & 40 & 2 & 4 & 67 & 52 \\
      \rowcolor[HTML]{F4F5F7} BPC (Ours) & 73 & 96 & \textbf{96} & \textbf{99} & \textbf{78} & 13 & 50 & \textbf{77} & 73 \\
      \bottomrule
  \end{tabular}
  \vspace{-5pt}
\end{table}
We also demonstrate the effectiveness of BPC on simulated visuomotor MimicGen~\cite{mandlekar2023mimicgen} tasks, shown in \Cref{tab:tabletop-tasks}. We compare performance against DARP \cite{pfeifer2026darp}, Sparse Diffusion Policy (SDP) \cite{wang2024sdp}, and $\pi_{0.5}$. We re-use SDP's reported success rates in the table. We find that BPC with the R$3$M \cite{nair2023r3m} image encoding is competitive with the state-of-the-art \emph{learned} policies, and superior to other retrieval policies, despite BPC requiring $1 - 5$ minutes to fit 1000 training demonstrations compared to hours (SDP and $\pi_{0.5}$). It is also quicker than DARP with R$3$M (dozens of minutes), as it parametrizes the residual correction using an MLP. We find that the state-of-the-art policies are not monotonically better than BPC (e.g. \textbf{Coffee}), which motivates future work investigating missing mechanisms within the BPC framework. 

\subsection{Drone Navigation}
We evaluate BPC in a visuomotor quadrotor gate-navigation task (Fig. \ref{fig:drone-trajectory}) to demonstrate the policy's efficacy in the data-sparse regime and moderate robustness to perceptual aliasing. Like in the hardware tabletop manipulation tasks, BPC uses \emph{raw pixels} (no image encoding) as visual inputs to the policy. The first task is a simple \textbf{Loop} through the two gates. The second is a \textbf{Figure-eight} with greater variation in altitude and significant visual overlap at varying points in the task. We evaluate the performance of BPC in terms of task success rate and mean distance to the demonstration set during policy  (\Cref{tab:drone-hardware}). With only 3-4 demos each, BPC is able to replicate policy behavior with approximately 11cm of average spatial distance from the training demonstrations during test time. An example timestep and retrieved candidates from of each of the tasks is illustrated in Figure \ref{fig:drone-trajectories}.

\begin{table}[t]
\centering
\captionsetup{font=footnotesize}
\caption{Drone hardware results for \textbf{Loop} and \textbf{Figure-eight}.
Demonstration-set distance is the mean 3D distance to the nearest
demonstration trajectory. Demo duration is the mean duration per
demonstration.}
\vspace{-3pt}
\label{tab:drone-hardware}
\footnotesize
\setlength{\tabcolsep}{3pt}
\renewcommand{\arraystretch}{1.15}

\noindent\resizebox{\columnwidth}{!}{%
\begin{tabular}{@{}lcccc@{}}
\toprule
\textbf{Task}
& \shortstack{\textbf{Success}\\\textbf{rate}}
& \shortstack{\textbf{Collision}\\\textbf{rate (c/m)}}
& \shortstack{\textbf{Demo-set}\\\textbf{distance (m)}}
& \shortstack{\textbf{Demos / demo}\\\textbf{duration (s)}} \\
\midrule
Loop  & 10/10 & 0.0000 & 0.095 & 3 / 25.2 \\
Figure-eight & 8/10  & 0.0108 & 0.109 & 4 / 33.5 \\
\bottomrule
\end{tabular}%
}
\vspace{-13pt}
\end{table}

\begin{figure}[t]
\centering
\includegraphics[width=1.0\columnwidth]{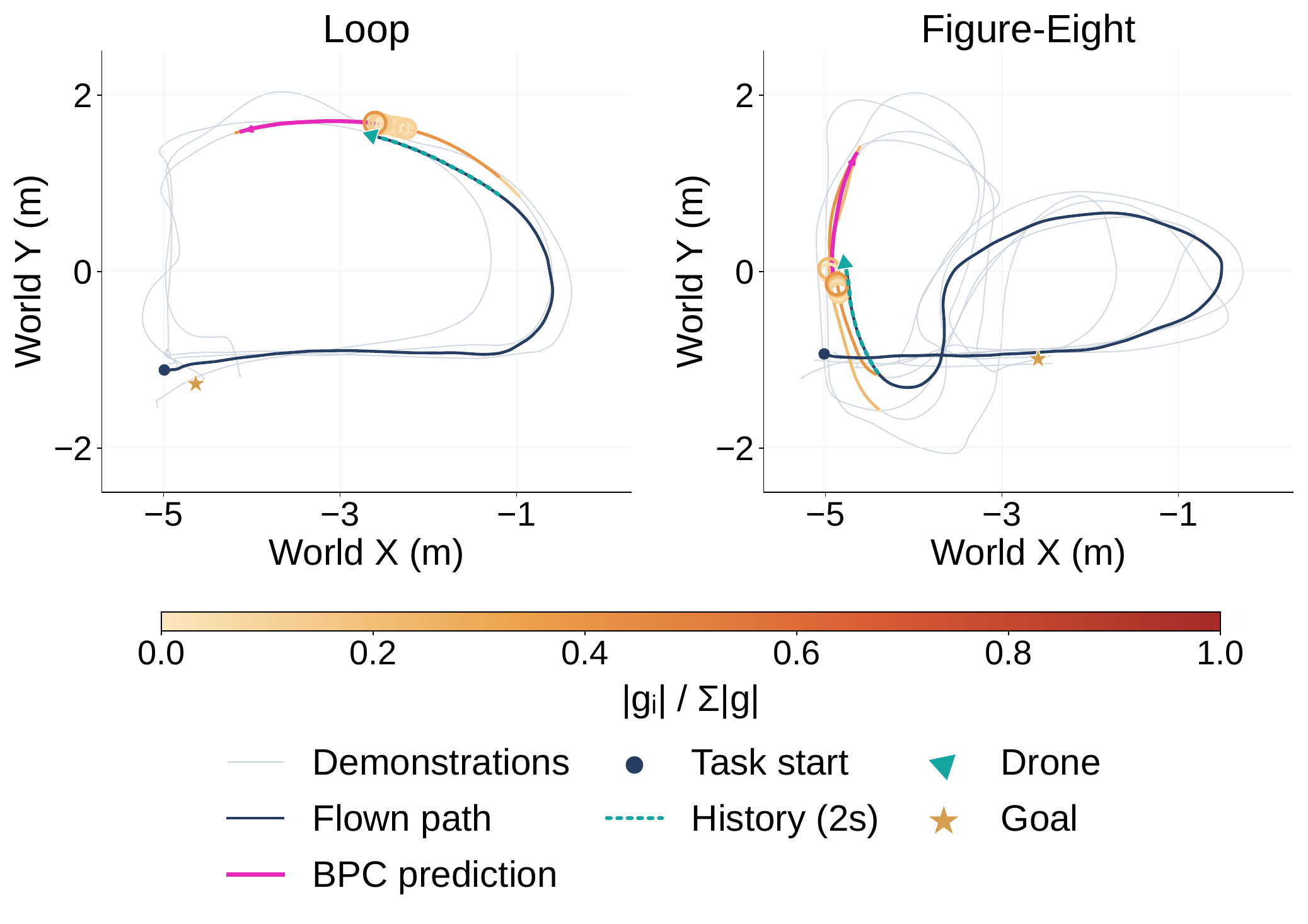} %
\captionsetup{font=footnotesize} 
\caption{Representative hardware trajectory snapshots for \textbf{Loop} (left) and \textbf{Figure-eight} (right). Demonstrations, flown paths, and predicted references are shown alongside the preceding two second history. Colored demonstration segments indicate normalized coefficient magnitudes, $|g_i|/\sum_j |g_j|$.}
\label{fig:drone-trajectories}
\vspace{-10pt}
\end{figure}

\textbf{Figure-eight} has perceptual aliasing at the crossover in the center of the trajectory, where the drone approaches the inside of the two gates, and at the beginning and end of the task, where the drone must navigate through the bottom and top of the left gate respectively. Out of ten trials, the only two failures in  \textbf{Figure-eight} are attributable to perceptual aliasing. The drone repeated a section of the trajectory in one failure, and entered the lower gate at the end instead of the upper gate in the other.

\begin{figure}[t]
  \centering
  \includegraphics[width=0.9\linewidth]{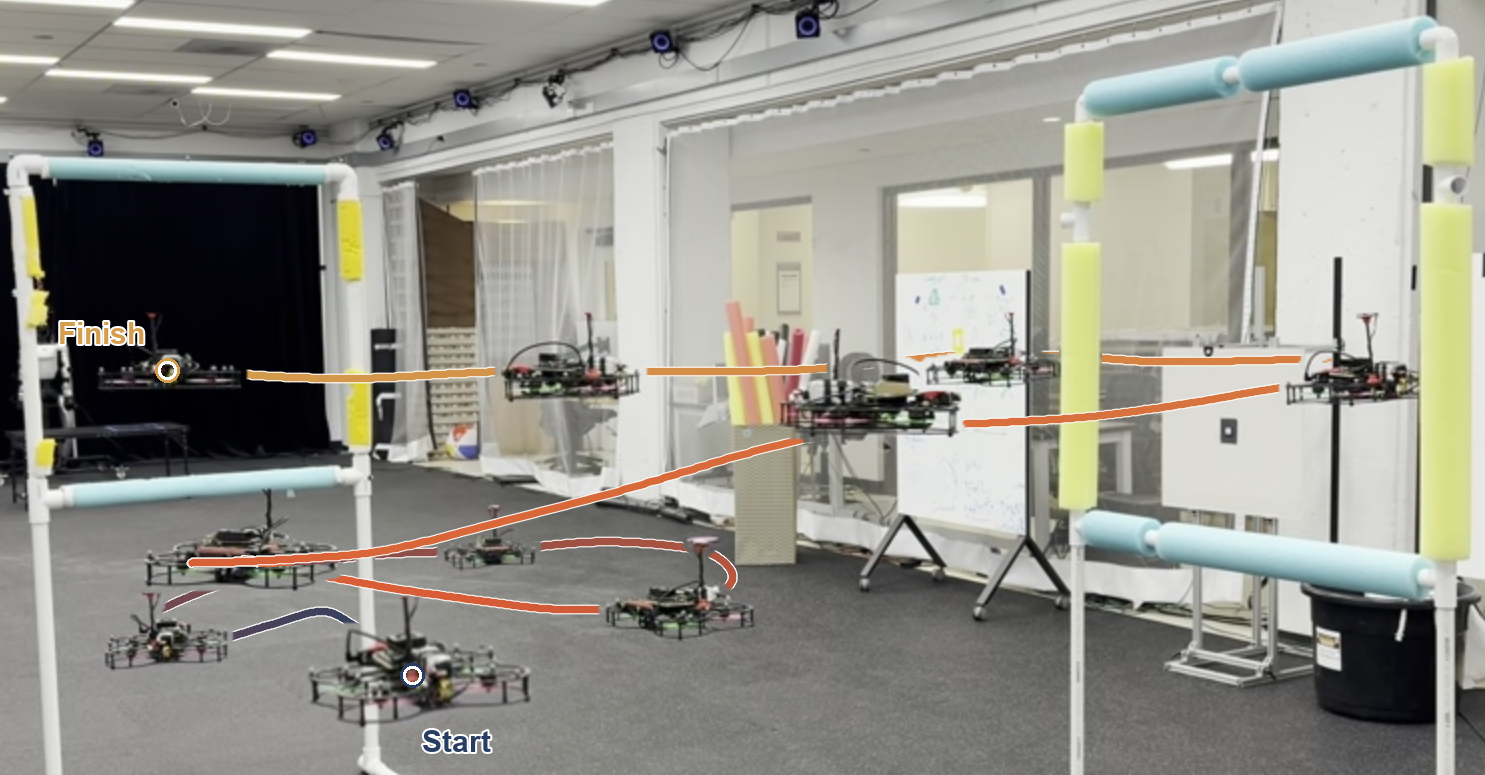} %
  \captionsetup{font=footnotesize}
  \caption{Drone navigation trajectory visualization.}
  \label{fig:drone-trajectory}
  \vspace{-5pt}
\end{figure}

\subsection{Dexterous Manipulation}
\begin{table}[t]
  \centering
  \captionsetup{font=footnotesize}
  \caption{Success rates (\%) on the dexterous manipulation suites with 120 rollouts per task: DexArt~\cite{bao2023dexart} and Adroit~\cite{rajeswaran2018dexterous}.}
  \label{tab:dexart-adroit}
  \fontsize{6}{7}\selectfont
  \setlength{\tabcolsep}{0.7pt}
  \renewcommand{\arraystretch}{1.05}
  \newcommand{\dextaskwidth}{\dimexpr(0.67\columnwidth-16\tabcolsep)/6\relax}
  \begin{tabular}{@{}>{\raggedright\arraybackslash}m{0.20\columnwidth}*{3}{>{\centering\arraybackslash}m{\dextaskwidth}}>{\centering\arraybackslash}m{0.065\columnwidth}*{3}{>{\centering\arraybackslash}m{\dextaskwidth}}>{\centering\arraybackslash}m{0.065\columnwidth}@{}}
      \toprule
      & \multicolumn{4}{c}{DexArt} & \multicolumn{4}{c}{Adroit} \\
      \cmidrule(lr){2-5} \cmidrule(lr){6-9}
      \textbf{Method} & \textbf{Toilet} & \textbf{Faucet} & \textbf{Laptop} & \textbf{Mean} & \textbf{Door} & \textbf{Hammer} & \textbf{Pen} & \textbf{Mean} \\
      \midrule
        & \includegraphics[width=\linewidth,height=\dextaskwidth,keepaspectratio]{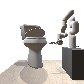}
       & \includegraphics[width=\linewidth,height=\dextaskwidth,keepaspectratio]{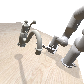}
       & \includegraphics[width=\linewidth,height=\dextaskwidth,keepaspectratio]{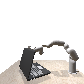}
       &
       & \includegraphics[width=\linewidth,height=\dextaskwidth,keepaspectratio]{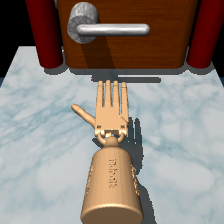}
       & \includegraphics[width=\linewidth,height=\dextaskwidth,keepaspectratio]{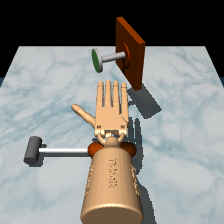}
       & \includegraphics[width=\linewidth,height=\dextaskwidth,keepaspectratio]{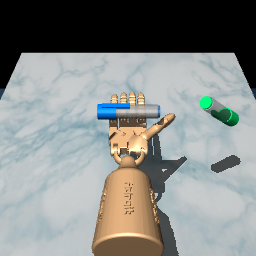}
       &
      \\
      \midrule
      SDP-3D~\cite{wang2024sdp} & 75 & 43 & \textbf{82} & \textbf{67} & 70 & \textbf{97} & \textbf{58} & \textbf{75} \\
      DARP-3D & \textbf{81} & 43 & 69 & 64 & 97 & 59 & 29 & 62 \\
      \rowcolor[HTML]{F4F5F7} BPC-3D (Ours) & 78 & \textbf{48} & 62 & 63 & \textbf{98} & 84 & 28 & 70 \\
      \bottomrule
  \end{tabular}
  \vspace{-5pt}
\end{table}
We demonstrate the flexibility of BPC on more complex dexterous manipulation tasks with a 3D encoding using PointNet~\cite{qi2017pointnet} instead of RGB images. Using 10 Adroit~\cite{rajeswaran2018dexterous} and 100 DexArt~\cite{bao2023dexart} demonstrations per task, we show competitive results with SDP-3D (success rates taken from \cite{wang2024sdp}) and better success rates than DARP with the same inputs (PointNet latents and proprioceptive). Success rates are averaged over 120 rollouts in \Cref{tab:dexart-adroit}.

\subsection{Ablations}
To test our assumptions about the contributions to the BPC pipeline, we remove parts of the pipeline and measure the success rates of the simpler visuomotor policy in MimicGen (\Cref{tab:ablation-tabletop}) using the same configuration as \Cref{tab:tabletop-tasks}. 
\textbf{NHC} stands for No History-conditioned Continuation, which replaces the action-continuation with a uniform-averaging kNN base policy. \textbf{NNC} denotes No Nonlinear Correction, keeping only the base policy. \textbf{NLR} signifies No Learned Retrieval, which chooses only the top-K neighbors using the raw L2 distance in the input space. We find that all pieces of the pipeline are synergistic and important, with the correction having the most impact, followed by the action continuation, and then the retrieval mechanism.

\begin{table}[t]
  \centering
  \captionsetup{font=footnotesize}
  \caption{Ablations on tabletop manipulation tasks: success rate (\%) over 300 trials per task.} 
  \vspace{-3pt}
  \label{tab:ablation-tabletop}
  \resizebox{\linewidth}{!}{%
    \begin{tabular}{lccccccccc}
      \toprule
      Method & Square & Stack & Coffee & Hammer & Mug & Nut & Stack3 & Thread & Mean \\
      \midrule
      NHC & \textbf{76} & 88 & 91 & 98 & 63 & 4 & 23 & 56 & 62 \\
      NNC & 73 & 10 & 91 & 96 & 31 & 9 & 0 & 50 & 45 \\
      NLR & 71 & 91 & 94 & \textbf{99} & 68 & 9 & 27 & 64 & 65 \\
      \rowcolor[HTML]{F4F5F7} BPC (Ours) & 73 & \textbf{96} & \textbf{96} & \textbf{99} & \textbf{78} & \textbf{13} & \textbf{50} & \textbf{77} & \textbf{73} \\
      \bottomrule
    \end{tabular}%
  }
\end{table}

\subsection{Computation Time}
Figure~\ref{fig:computation-scaling} reports fitting and policy-execution costs of BPC across different dataset scales.  Even at massive dataset scales, BPC only takes 3 minutes to fit while remaining well within the memory limits of consumer GPUs. It deploys at 100 Hz on an RTX 4090. We push BPC to the limits by fitting and deploying on an 8 GB Jetson Orin Nano, which takes less than 3 minutes to fit and was executed on-board the drone in \Cref{fig:drone-trajectory} at 75 Hz. 
\begin{figure*}[t]
  \centering
  \vspace{-10pt}
  \captionsetup{font=footnotesize}
  \includegraphics[width=\textwidth]{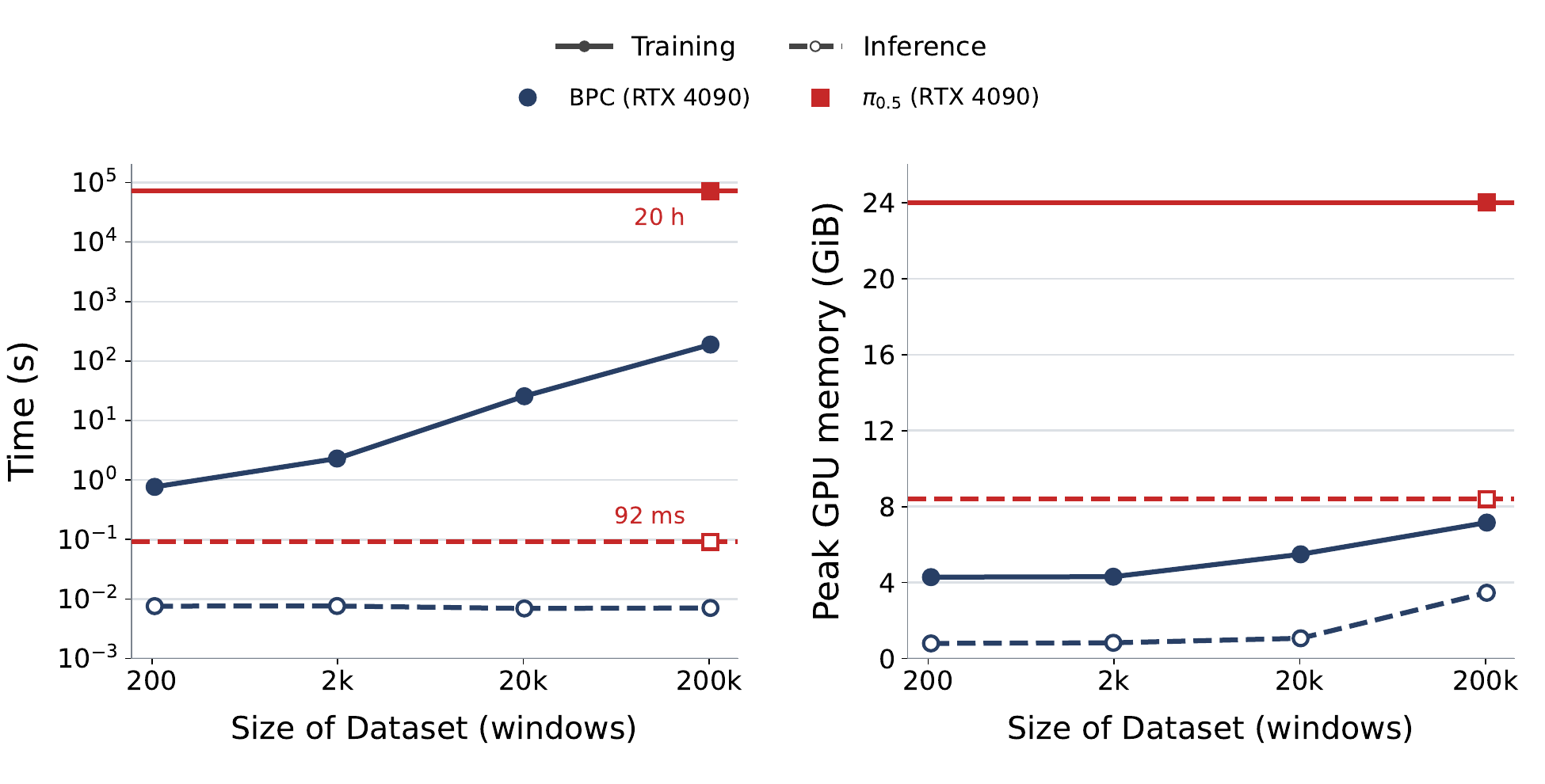}
  \caption{Computation scaling across different policies (BPC and $\pi_{0.5}$) across the number of windows in the bank.
  Left: fitting time and per-call execution latency (log scale); right: peak GPU memory.
  Solid lines and filled shapes show fitting, while dotted lines and open shapes show test-time values.
  Squares mark $\pi_{0.5}$ on an RTX 4090, and circles represent BPC values on an RTX 4090.
  }
  \label{fig:computation-scaling}
  \vspace{-17pt}
\end{figure*}

\section{Limitations}
BPC depends on demonstration coverage and on observation histories that
contain enough information to distinguish the required actions. Retrieval
cannot resolve ambiguities caused by missing task information, and behavior
far from the demonstrated states may require recovery actions absent from
the bank. Affine continuation permits extrapolation, but large signed
coefficients can amplify approximation error. Our analysis assumes a locally
smooth expert policy and bounds the uncorrected action prior; it does not
guarantee closed-loop stability, safety, or task completion after residual
correction. Specifically, we find that performance varies across tasks, with weak results on \textbf{Nut}, \textbf{Stack3}, and \textbf{Pen}. Further investigation is required to find mechanisms or dataset-specific quirks that BPC can exploit.  

Avoiding end-to-end policy training does not eliminate fitting or data
preparation. BPC still fits retrieval and residual components, and retaining the
demonstration bank also introduces storage and retrieval costs that grow
with the dataset. Although BPC is already compelling for large single-task datasets, future work will be dedicated to massively scaling this paradigm to multi-task and many more demonstrations. 

\section{Conclusion}
Behavior Predictive Control formulates behavior cloning as local action
continuation over explicit demonstration trajectories. It combines learned
retrieval, regularized history reconstruction, and a shared-support residual
to construct visuomotor policies without end-to-end policy training.
Simulation and hardware results show competitive performance compared to state-of-the-art end-to-end policies on several
manipulation and navigation tasks.
In fact, BPC can be fitted and deployed on consumer GPUs and even Jetson Orin Nanos. 
Retaining the demonstration bank
also makes action provenance accessible, enables direct behavior revision,
and provides an intrinsic estimate of task progress. We find that BPC's interpretability and modularity are highly synergistic with agentic loops, allowing agents to quickly deploy,  effectively retrieve useful feedback, surgically revise, and re-deploy. Finally, these properties motivate
further work on uncertainty under distribution shift, scalable bank updates and continual learning,
and generalization to multi-task deployments.

\section*{Acknowledgments}
Codex was used in drafting portions of the mathematics in Sections 3 and 4, and in preparing LaTeX formatting. Its image-generation system was used to generate the conceptual robot-arm and drone illustrations in Figure 1.

\bibliographystyle{IEEEtran}
\bibliography{main}

\clearpage
\appendix[Simulation Implementation and Hyperparameters]
\label{app:simulation}
This appendix specifies the BPC configurations used for the simulation results in Tables~\ref{tab:tabletop-tasks} and~\ref{tab:dexart-adroit}. The tabletop ablations in Table~\ref{tab:ablation-tabletop} use the same base configuration with the indicated component removed. These settings describe the reported benchmark runs, rather than later task-specific configurations. Hardware experiments are outside the scope of this appendix.


\subsection{Temporal Windows and Control Protocol}
The history length $T_{\mathrm{ini}}$ counts past observation--action steps, and the continuation horizon $T_f$ counts future action steps in each bank window. All simulation configurations use $T_f=10$. The history is 10 steps except for Square, which uses 5. Thus, a bank window spans 20 steps, or 15 for Square. The controller executes one action and updates its history before producing the next action; the 10-step continuation is not executed as an open-loop action chunk. The history contains the actions actually executed in the simulator.

\begin{table}[h]
\centering
\caption{Simulation data and temporal settings. Episode limits include demonstration-action warmup. Counts and horizons are in simulator control steps.}
\label{tab:appendix-protocol}
\footnotesize
\setlength{\tabcolsep}{3pt}
\renewcommand{\arraystretch}{1.12}
\begin{tabular}{@{}lccc@{}}
\toprule
Setting & MimicGen & Adroit & DexArt \\
\midrule
Demonstrations per task & 1,000 & 10 & 100 \\
History $T_{\mathrm{ini}}$ & 10 (Square: 5) & 10 & 10 \\
Continuation $T_f$ & 10 & 10 & 10 \\
Actions executed per update & 1 & 1 & 1 \\
Warmup & 10 (Square: 5) & 10 & 10 \\
Maximum episode length & 600 & 200 & 250 \\
Reported rollouts per task & 300 & 120 & 120 \\
\bottomrule
\end{tabular}
\end{table}

MimicGen uses independent reset seeds 100000--100299. After warmup, BPC controls at most 590 steps (595 for Square), with early stopping on task success. For Adroit and DexArt, four fitted metric seeds (0--3) are each evaluated on the same 30 reset seeds, 950000--950029. BPC controls up to 190 steps in Adroit and 240 in DexArt after the 10-step warmup. DexArt uses the seen-instance catalog. Dexterous success is measured by the native task predicate at the end of the rollout, not by the generic episode-termination flag. Actions are clipped to $[-1,1]$ in all three suites.

\subsection{Observation Encoders}
\textbf{MimicGen.} The policy receives an external agent-view image and a wrist-camera image. Each RGB image is resized to $224\times224$ and encoded by a frozen, pretrained R$3$M ResNet-18 into a $512$D feature. Concatenating the two image embeddings with the 9-dimensional robot proprioception gives a 1,033-dimensional policy observation. The robot proprioception consists of end-effector position (3), end-effector quaternion (4), and gripper joint positions (2). The action is operational-space end-effector deltas with 7 components. Unlike in related work \cite{pfeifer2026darp}, no privileged simulator object-state vector is supplied to the policy.

\textbf{Adroit and DexArt.} The reported BPC-3D configuration uses the DP3-style \cite{Ze2024DP3} PointNet encoder, with a 64-dimensional point-cloud embedding concatenated with robot proprioception. Adroit uses 24 proprioceptive components, yielding 88 observation components; Door, Hammer, and Pen have 28, 26, and 24 action components, respectively. DexArt uses 32 proprioceptive components, yielding 96 observation components, and 22 action components for Toilet, Faucet, and Laptop. Adroit uses 512 depth-derived XYZ points. DexArt uses 1,024 observed XYZ points augmented with 96 robot-model points (1,120 points in total). The PointNet applies pointwise layers of widths $3\rightarrow64\rightarrow128\rightarrow256$, max pooling, and a projection to 64 features, with layer normalization.

For each task, this encoder is trained from scratch on the selected demonstration subset using an auxiliary behavior-cloning objective for 2,000 updates, batch size 32, and learning rate $10^{-4}$. The auxiliary model uses two observation steps and predicts 16 action steps; these are encoder-training settings, distinct from BPC's 10-step history and continuation. Only the point-cloud encoder is retained for BPC, and it is frozen during retrieval/residual fitting and deployment. Thus, the reported 3D pipeline includes task-specific encoder training; it does not use released SDP weights or the public PointNet++ alternative. Frozen pre-trained PointNet latents achieve slightly lower success rates on these dexterous tasks. 

\subsection{Retrieval and Residual Hyperparameters}

Two distinct random-feature maps are used: the retrieval map in Linear Discriminant Analysis, and the residual-correction map. Both contain 16,384 features; the reduced retrieval representation has rank 70. The retrieval fit uses up to 8,192 anchors. These dimensions should not be confused with the image or point-cloud embedding dimensions.


The archived retrieval configuration uses the entropy-teacher Fisher-LDA fit. The affine readout uses the exact solver with a sum-to-one constraint and no additional $\ell_1$ penalty. 

\end{document}